\documentclass[11pt, a4paper]{article}

\usepackage[utf8]{inputenc}
\usepackage[T1]{fontenc}
\usepackage{amsmath, amssymb, amsthm, amsfonts}
\newenvironment{fakeproof}
  {\begin{proof}[\textit{``Proof''}]}
  {\end{proof}}
\usepackage{geometry}
\usepackage{hyperref}
\usepackage{graphicx}
\usepackage{mathtools}
\usepackage[capitalize]{cleveref}

\newtheorem{theorem}{Theorem}
\newtheorem{lemma}{Lemma}

\newtheorem{definition}{Definition}

\DeclareMathOperator{\Tr}{Tr}
\DeclareMathOperator{\diag}{diag}
\newcommand{\R}{\mathbb{R}}
\newcommand{\norm}[1]{\left\lVert#1\right\rVert}

\newcommand{\speak}[2]{%
    \vspace{0.5em}\par\noindent\hangindent=2em\hangafter=1%
    \textbf{\textsc{#1}:} #2%
    \par\vspace{0.5em}%
}

\usepackage{listings}
\usepackage{xcolor}

\definecolor{codegreen}{rgb}{0,0.6,0}
\definecolor{codegray}{rgb}{0.5,0.5,0.5}
\definecolor{backcolour}{rgb}{0.95,0.95,0.92}

\newtheorem{prop}{Proposition}

\title{\textbf{Global Convergence of Multiplicative Updates for the Matrix Mechanism: A Collaborative Proof with Gemini 3.}}
\author{Keith Rush \\ \small{Google DeepMind}}
\date{\today}

\begin{document}

\maketitle

\begin{abstract}
We analyze a fixed-point iteration $v \leftarrow \phi(v)$ arising in the optimization of a regularized nuclear norm objective involving the Hadamard product structure, posed in~\cite{denisov} in the context of an optimization problem over the space of algorithms in private machine learning. We prove that the iteration $v^{(k+1)} = \diag((D_{v^{(k)}}^{1/2} M D_{v^{(k)}}^{1/2})^{1/2})$ converges monotonically to the unique global optimizer of the potential function $J(v) = 2 \Tr((D_v^{1/2} M D_v^{1/2})^{1/2}) - \sum v_i$, closing a problem left open there. We also connect this iteration to known results in quantum information and optimal transport.

The bulk of this proof was provided by Gemini 3, subject to some corrections and interventions. Gemini 3 also sketched the initial version of this note. Thus, it represents as much a commentary on the practical use of AI in mathematics as it represents the closure of a small gap in the literature. As such, we include a small narrative description of the prompting process, and some resulting principles (and observed pitfalls) when working with AI to prove mathematics.
\end{abstract}

\section{The Problem and its Brief History}

We consider the problem of finding the fixed point $v^* \in \R_{++}^N$ satisfying the equation:
\begin{equation}
    v^* = \diag\left( \left( D_{v^*}^{1/2} M D_{v^*}^{1/2} \right)^{1/2} \right)
    \label{eq:fixed_point}
\end{equation}
where $M \in \R^{N \times N}$ is a symmetric positive definite matrix ($M \succ 0$), and $D_v = \diag(v)$.

This fixed-point equation arises naturally as the stationarity condition ($\nabla J(v) = 0$) for the potential function $J: \R_{++}^N \to \R$:
\begin{equation}
    J(v) = 2 \Tr\left( \left( D_v^{1/2} M D_v^{1/2} \right)^{1/2} \right) - \sum_{i=1}^N v_i
    \label{eq:potential}
\end{equation}

To our knowledge, the first occurrence of this potential can be found in~\cite{denisov}, where it arose in the context of designing matrix factorizations that are in some sense \emph{optimal} for the matrix mechanism~\cite{mm} under a specific notion of error. \cite{denisov} was able to show that iterating the mapping~\cref{eq:fixed_point} converged locally, and provided empirical evidence that this fixed-point iteration converged extremely rapidly, outpacing pure gradient-based approaches, but global convergence of these iterates was left open.

The followup papers in the series of~\cite{denisov} (see the literature review in~\cite{pillutla2025correlatednoisemechanismsdifferentially}) generally moved away from the unconstrained optimization problem that yields the potential~\cref{eq:potential}, in order to impose structural constraints on matrices which address practical concerns like runtime efficiency and ability to prove privacy amplification results. The specification of the BLT class~\cite{blt} providing the ability to compute near-optimal factorizations with complexity independent of matrix size rendered the convergence of the iterates of~\cref{eq:fixed_point} even more academic. Still, I always viewed it as unsatisfactory that we were unable to prove global convergence despite reasonably serious attempts by reasonably serious mathematicians. This global convergence always had the feel to me of something which was just an issue of putting together the right tools, which probably all existed.

I therefore kept it in my back pocket, and tested it repeatedly on new versions of various AI models as they were released. Given my affiliation, most of these attempts used Gemini. With a few iterations, Gemini 3 was able to identify the crucial piece that we had missed: a variational characterization of the nuclear norm that allows the problem to be diagonalized.

This diagonalization of our crucial object essentially required the ability to call certain linear algebra to hand that neither myself nor my collaborators demonstrated, but it was not exactly revolutionary. This is not unreasonable -- mathematics is quite large, and not every mathematician can recall every inequality or characterization of an object under study, particularly if this object is far from their main field. We hope that this note can serve as a small example of utilization of AI in mathematics, echoing several that have appeared elsewhere in the literature, and also represent the closure of a problem that had always nagged me.

We study the convergence of the parameter-free multiplicative update rule:
\begin{equation}
    v^{(k+1)} = \phi(v^{(k)}) = \diag\left( \left( D_{v^{(k)}}^{1/2} M D_{v^{(k)}}^{1/2} \right)^{1/2} \right)
    \label{eq:update}
\end{equation}

Our main contribution is a proof that this iteration satisfies the monotonic ascent property $J(v^{(k+1)}) \ge J(v^{(k)})$, ensuring global convergence to the unique maximum. We also include discussion of the experience of working interactively with natural-language AI models to produce nontrivial mathematics.

\section{Theoretical Analysis}

\subsection{Basic Properties}

The result here depends critically on a representation of the first term in our potential as the \emph{nuclear norm} of a matrix, and a particular variational characterization for this norm.

\begin{definition}[Nuclear Norm]
For any real matrix $A$, define the nuclear norm $\|A\|_*$ as:
$$
 \|A\|_* = \Tr((A^\top A)^{1/2})
$$
\end{definition}

We will collect the properties of the nuclear norm that we need before we begin.

\begin{lemma}\label{lem:var_nuc}
For any real $A$, the nuclear norm can be characterized in the following manner:
\[
    \| A \|_* = \sup_{U \in \mathcal{U}(N)} \Tr(U^\top A)
\]
where $\mathcal{U}(N)$ is the set of unitary matrices. The supremum is attained when $U$ is the unitary factor from  the polar decomposition of $A$ (see Section 7.4 of~\cite{handj}).

\end{lemma}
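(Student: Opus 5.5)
The plan is to reduce everything to the singular value decomposition of $A$ and a classical inequality about traces of orthogonal matrices. Throughout, $A$ is taken to be $N\times N$ and real, so that "unitary" in $\mathcal{U}(N)$ means real orthogonal. The complex case is identical with $U^*$ in place of $U^\top$ and real parts taken.

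First write $A = W\Sigma V^\top$ with $W,V$ orthogonal and $\Sigma=\diag(\sigma_1,\dots,\sigma_N)$, $\sigma_i\ge 0$. Then $A^\top A = V\Sigma^2V^\top$, whose unique positive semidefinite square root is $V\Sigma V^\top$. Hence $\|A\|_* = \sum_i \sigma_i$, the identification that makes the rest of the argument run.

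\textbf{Upper bound.} For any orthogonal $U$, cyclicity of the trace gives
\[
\Tr(U^\top A) = \Tr(U^\top W\Sigma V^\top) = \Tr(Q\Sigma),
\]
where $Q = V^\top U^\top W$ is again orthogonal. Thus $\Tr(Q\Sigma) = \sum_i Q_{ii}\sigma_i$. Every entry of an orthogonal matrix has absolute value at most $1$, since its columns are unit vectors. Because each $\sigma_i \ge 0$, this yields $\Tr(U^\top A)\le \sum_i\sigma_i = \|A\|_*$. This is the only step with any content, and it is elementary. The point to watch is that the diagonal entries of $Q$ are weighted by nonnegative numbers; that is exactly why the SVD, rather than an eigendecomposition of a possibly indefinite $A$, is the right reduction.

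\textbf{Attainment.} Take $U = WV^\top$. Then $U^\top A = VW^\top W\Sigma V^\top = V\Sigma V^\top = (A^\top A)^{1/2}$, whose trace is $\|A\|_*$. So the supremum is a maximum. This $U$ is precisely the orthogonal factor in the polar decomposition $A = U (A^\top A)^{1/2}$, matching the cited Section 7.4 of~\cite{handj}.

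When $A$ is singular, the polar factor is not unique, but the argument is unaffected. Any $U$ with $A = U(A^\top A)^{1/2}$ gives $\Tr(U^\top A)=\Tr((A^\top A)^{1/2})$, since $U^\top U = I$. Combining the upper bound with attainment proves the lemma.
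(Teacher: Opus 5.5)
Your proof is correct, and its skeleton matches the paper's: attainment by the polar factor, plus an upper bound valid for every orthogonal $U$. The difference is in how you get the upper bound.

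The paper cites von Neumann's trace inequality and notes that an orthogonal matrix has all singular values equal to $1$. You prove the needed case from scratch: you pass to the SVD $A = W\Sigma V^\top$, rewrite $\Tr(U^\top A) = \sum_i Q_{ii}\sigma_i$ with $Q = V^\top U^\top W$ orthogonal, and use $|Q_{ii}|\le 1$. This is exactly what von Neumann's inequality reduces to when one factor has all singular values equal to $1$. So your route avoids the nontrivial part of the general inequality (pairing two decreasingly ordered singular value sequences, usually handled by a Birkhoff or rearrangement argument) and makes the lemma self-contained. The single SVD also makes explicit the identity $\|A\|_* = \sum_i \sigma_i$, which the paper uses tacitly in its final equality, and it produces the polar factor $WV^\top$ concretely.

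The paper's citation buys brevity and places the lemma within the familiar duality between the nuclear and operator norms. Your argument gives that duality as well, since $|Q_{ii}|\le 1$ holds for any contraction $Q$, not only orthogonal ones. Your remark that every polar factor of a singular $A$ attains the supremum covers a point the paper leaves implicit.
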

\begin{proof}
We recall the polar factorization of $A$, the representation $A = UP$ where $P = (A^\top A)^{1/2}$ and $U$ is unitary. (Since $A$ is real, $U$ is in fact orthogonal). Therefore $U^\top A = P$, and:

$$\|A\|_* = \Tr\left((A^\top A)^{1/2}\right) = \Tr(P) = \Tr(U^\top A).$$

This shows that $\|A\|_* \leq \sup_{U \in \mathcal{U}(N)} \Tr(U^\top A)$. The reverse inequality follows from the Von Neumann trace inequality; for $O$ unitary, its singular values are of absolute value 1, and therefore
$$|\Tr(O^\top A)| \leq \sum |\omega_i \alpha_i| = \sum |\alpha_i| = \Tr\left((A^\top A)^{1/2}\right)$$

\end{proof}

\begin{lemma}[Strict Concavity]\label{lem:concave}
For a symmetric positive definite matrix $M \succ 0$, the potential function 
\[
J(v) = 2 \Tr\left( \left( D_v^{1/2} M D_v^{1/2} \right)^{1/2} \right) - \sum_{i=1}^N v_i
\]
is strictly concave on the positive orthant $\R_{++}^N$.
\end{lemma}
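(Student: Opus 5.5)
The linear term $-\sum_i v_i$ does not affect concavity, so the whole task is to show that $F(v) = \Tr\bigl((D_v^{1/2} M D_v^{1/2})^{1/2}\bigr)$ is strictly concave on $\R_{++}^N$. The variational formula of \cref{lem:var_nuc} expresses $F$ as a supremum of functions of $v$. A supremum of concave functions need not be concave, so I will not use that lemma here. Instead I plan to rewrite $F$ as a concave spectral function composed with an injective linear map.

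\textbf{Step 1 (move $v$ into a linear position).} Write $M = M^{1/2} M^{1/2}$ with $M^{1/2} \succ 0$. The matrices $D_v^{1/2} M D_v^{1/2} = (M^{1/2} D_v^{1/2})^\top (M^{1/2} D_v^{1/2})$ and $M^{1/2} D_v M^{1/2} = (M^{1/2} D_v^{1/2})(M^{1/2} D_v^{1/2})^\top$ have the same eigenvalues, since $A^\top A$ and $AA^\top$ share their spectrum. Hence
\[
F(v) = \Tr\bigl( (M^{1/2} D_v M^{1/2})^{1/2} \bigr) = g(L(v)),
\]
where $L(v) = M^{1/2} D_v M^{1/2}$ and $g(X) = \Tr(X^{1/2})$. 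The map $L$ is linear, and it sends $\R_{++}^N$ into the positive definite cone. It is also injective: if $M^{1/2} D_h M^{1/2} = 0$, then $D_h = 0$ because $M^{1/2}$ is invertible. Since $L$ is linear and injective, strict concavity of $F$ reduces to strict concavity of $g$ on positive definite matrices, that is,
\[
g\Bigl(\tfrac{X+Y}{2}\Bigr) > \tfrac12\bigl(g(X)+g(Y)\bigr) \quad \text{for } X \neq Y.
\]
Midpoint strict concavity suffices here, because $g$ is continuous.

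\textbf{Step 2 (strict concavity of $\Tr(X^{1/2})$).} This is the main step. I would prove it with the Peierls-type inequality for the concave function $f(t)=\sqrt t$: for any orthonormal basis $\{e_i\}$ and $X \succ 0$,
\[
\sum_i f(\langle e_i, X e_i\rangle) \ge \Tr f(X).
\]
To prove it, expand $e_i$ in an eigenbasis of $X$. Then $\langle e_i, X e_i\rangle$ is a convex combination of the eigenvalues of $X$, with weights forming a doubly stochastic matrix. Applying Jensen row by row and summing over $i$ gives the inequality.

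Now let $Z = (X+Y)/2$, take an eigenbasis $\{e_i\}$ of $Z$, and set $a_i = \langle e_i, X e_i\rangle$ and $b_i = \langle e_i, Y e_i\rangle$. Then
\[
\Tr f(Z) = \sum_i f\Bigl(\tfrac{a_i+b_i}{2}\Bigr) \ge \tfrac12 \sum_i \bigl(f(a_i)+f(b_i)\bigr) \ge \tfrac12\bigl(\Tr f(X) + \Tr f(Y)\bigr).
\]
The first inequality is scalar concavity of $f$; the second is Peierls applied to $X$ and to $Y$.

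\textbf{Step 3 (the equality case, which I expect to be the main obstacle).} Suppose equality holds throughout. Strict concavity of $\sqrt{\cdot}$ then forces $a_i = b_i$ for every $i$. Equality in Peierls forces each $e_i$ to be an eigenvector of $X$ and of $Y$: $f$ is strictly concave, so the Jensen step is strict unless every weight vector is concentrated on a single eigenvalue. Then $X$ and $Y$ are both diagonal in the basis $\{e_i\}$, with the same diagonal entries $a_i = b_i$, so $X = Y$.

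Combining the three steps: for $v \neq w$ in $\R_{++}^N$, injectivity of $L$ gives $L(v) \neq L(w)$, so the strict inequality for $g$ transfers to $F$, and hence to $J$.
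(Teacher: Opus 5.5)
Your proof is correct, and its skeleton matches the paper's. Step 1 is exactly the paper's reduction: it also uses that $D_v^{1/2} M D_v^{1/2}$ and $M^{1/2} D_v M^{1/2}$ have the same spectrum, and that $v \mapsto M^{1/2} D_v M^{1/2}$ is linear and injective.

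The difference is in the key lemma. For concavity of $X \mapsto \Tr(X^{1/2})$, the paper simply cites Lieb's concavity theorem. You instead prove it directly, using the Peierls/Jensen argument in an eigenbasis of the midpoint $Z$.

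The paper's route costs one line, but Lieb's theorem is far heavier machinery than this elementary fact needs. More to the point, it yields only concavity. The composition argument needs strict concavity of $\Tr(X^{1/2})$ as input, at least along the image of $L$, and the paper leaves this implicit.

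Your Step 3 supplies that missing ingredient. Equality forces $a_i = b_i$, and forces every $e_i$ to lie in a single eigenspace of both $X$ and $Y$, so $X = Y$. Read ``concentrated on a single eigenvalue'' as a single eigenvalue \emph{value}, not a single eigenvector index, so that repeated eigenvalues are covered.

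The midpoint-plus-continuity reduction is also sound. If $J$ is concave and equals its chord at one interior point of a segment, it is affine on the whole segment, which contradicts strict midpoint concavity.

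So your argument is self-contained and proves the statement exactly as written, including strictness. It takes half a page where the paper uses a citation.
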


\begin{proof}
For any matrices $A$ and $B$, $AB$ and $BA$ share the same non-zero eigenvalues (this is~\cite{handj}, Theorem 1.3.22). Let $A = D_v^{1/2}M^{1/2}$ and $B = M^{1/2} D_v^{1/2}$.

As $AB$ and $BA$ share eigenvalues, the trace of their square roots is identical:

$$\Tr\left((D_v^{1/2} M D_v^{1/2})^{1/2}\right) = \Tr\left((M^{1/2} D_v M^{1/2})^{1/2}\right)$$

Since the mapping $v \mapsto M^{1/2} D_v M^{1/2}$ is linear in $v$ and injective, Lieb's concavity theorem (\cite{LIEB1973267}) combined with concavity for a composition of an injective linear map with a concave map finish the proof.
\end{proof}

\begin{lemma}[Coercivity]
The potential function $J(v)$ is coercive on $\R_{+}^N$ in the sense that $J(v) \to -\infty$ as $\norm{v}_1 \to \infty$.
\end{lemma}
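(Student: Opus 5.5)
The plan is to show that the trace term grows only like $\sqrt{\norm{v}_1}$, while the linear term $-\sum_i v_i = -\norm{v}_1$ (for $v \in \R_+^N$) grows linearly. Coercivity then follows immediately.

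First, as in the proof of \cref{lem:concave}, I use the fact that $AB$ and $BA$ share nonzero eigenvalues to write
\[
\Tr\left((D_v^{1/2} M D_v^{1/2})^{1/2}\right) = \norm{M^{1/2} D_v^{1/2}}_* .
\]
This is just the nuclear norm of $A = M^{1/2}D_v^{1/2}$, since $A^\top A = D_v^{1/2} M D_v^{1/2}$.

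Next, I bound the nuclear norm by the Frobenius norm via Cauchy--Schwarz on the singular values: $\norm{A}_* = \sum_i \sigma_i \le \sqrt{N}\,(\sum_i \sigma_i^2)^{1/2} = \sqrt{N}\,\norm{A}_F$. The Frobenius norm is computed directly:
\[
\norm{A}_F^2 = \Tr(D_v^{1/2} M D_v^{1/2}) = \sum_i M_{ii} v_i \le \lambda_{\max}(M)\,\norm{v}_1 ,
\]
using $v_i \ge 0$ and $M_{ii} \le \lambda_{\max}(M)$. Alternatively, Lemma~\ref{lem:var_nuc} with $|\Tr(U^\top A)| \le \sqrt{N}\norm{A}_F$ gives the same bound.

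Combining these,
\[
J(v) \le 2\sqrt{N \lambda_{\max}(M)}\,\sqrt{\norm{v}_1} - \norm{v}_1 ,
\]
and the right-hand side tends to $-\infty$ as $\norm{v}_1 \to \infty$. The same estimate also shows that $J$ is bounded above on $\R_+^N$ and that its superlevel sets are bounded, which is presumably what is needed downstream for existence of the maximizer.

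There is no real obstacle. The only point needing care is that the square root in $J$ is well defined and continuous on the closed orthant $\R_+^N$, where $D_v^{1/2}MD_v^{1/2}$ is only positive semidefinite. This holds because the matrix square root is continuous on the PSD cone, and the argument above never needs $v \succ 0$.
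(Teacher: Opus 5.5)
Your proposal is correct and matches the paper's proof: both bound $\Tr\bigl((D_v^{1/2}MD_v^{1/2})^{1/2}\bigr)$ by $\sqrt{N}$ times a Frobenius norm, compute that squared Frobenius norm as $\sum_i M_{ii}v_i$, and compare $\sqrt{\norm{v}_1}$ growth against the linear term. The only cosmetic difference is that you further bound $M_{ii}\le\lambda_{\max}(M)$, where the paper keeps the slightly tighter constant $\max_i M_{ii}$; this does not affect the conclusion.
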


\begin{proof}
Let $A = D_v^{1/2} M D_v^{1/2}$. Note that $\Tr(A^{1/2})$ is the nuclear norm of $D_v^{1/2} M^{1/2}$. By the relationship between the nuclear norm and the Frobenius norm, $\Tr(A^{1/2}) \le \sqrt{N \Tr(A)}$. We have:
\begin{equation*}
    \Tr(A) = \Tr(D_v^{1/2} M D_v^{1/2}) = \Tr(M D_v) = \sum_{i=1}^N M_{ii} v_i \le \max_i(M_{ii}) \norm{v}_1
\end{equation*}
Thus, $J(v) \le 2\sqrt{N \max_i(M_{ii}) \norm{v}_1} - \norm{v}_1$. As $\norm{v}_1 \to \infty$, the linear term dominates the square-root term, hence $J(v) \to -\infty$. This, combined with strict concavity and the fact that $J(v)$ is bounded at the boundary $\partial \R_{+}^N$, ensures the existence of a unique maximizer $v^* \in \R_{++}^N$.
\end{proof}

\begin{lemma}[Positive Invariance]
If $v^{(k)} \in \R_{++}^N$ and $M \succ 0$, then $v^{(k+1)} = \phi(v^{(k)}) \in \R_{++}^N$.
\end{lemma}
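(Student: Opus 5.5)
The plan is to show that $A = D_{v}^{1/2} M D_{v}^{1/2}$ is symmetric positive definite, so that its principal square root $A^{1/2}$ is also symmetric positive definite. The diagonal entries of a positive definite matrix are strictly positive, so $\phi(v) = \diag(A^{1/2}) \in \R_{++}^N$.

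First, since $v = v^{(k)} \in \R_{++}^N$, the matrix $D_v^{1/2}$ is a well-defined real diagonal matrix with strictly positive diagonal entries $\sqrt{v_i}$. In particular it is symmetric and invertible. Hence $A$ is symmetric, and it is a congruence transform of $M$. For any nonzero $x \in \R^N$, set $y = D_v^{1/2} x$, which is nonzero because $D_v^{1/2}$ is invertible. Then
\[
x^\top A x = y^\top M y > 0
\]
since $M \succ 0$. So $A \succ 0$.

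Second, by the spectral theorem write $A = Q \Lambda Q^\top$ with $Q$ orthogonal and $\Lambda = \diag(\lambda_1,\dots,\lambda_N)$, all $\lambda_j > 0$. The principal square root is $A^{1/2} = Q \Lambda^{1/2} Q^\top$. It is symmetric with eigenvalues $\sqrt{\lambda_j} > 0$, hence positive definite. The same argument shows it is the square root used in~\cref{eq:update}: it is the unique positive semidefinite square root, so there is no ambiguity in the definition of $\phi$.

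Finally, for each $i$ take $x = e_i$, the $i$-th standard basis vector. This gives
\[
\phi(v)_i = (A^{1/2})_{ii} = e_i^\top A^{1/2} e_i > 0.
\]
Equivalently, $(A^{1/2})_{ii} = \sum_j Q_{ij}^2 \sqrt{\lambda_j}$, which is positive because each row of $Q$ is a unit vector. Thus $v^{(k+1)} \in \R_{++}^N$.

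There is no serious obstacle. The only point needing care is that invertibility of $D_v^{1/2}$, which uses strict positivity of every $v_i$, is what makes the congruence preserve strict definiteness. On the boundary of the orthant $A$ would only be positive semidefinite, and some diagonal entries of $A^{1/2}$ could vanish.
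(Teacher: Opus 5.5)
Your proposal is correct and follows the paper's proof: congruence by the invertible $D_{v^{(k)}}^{1/2}$ preserves $M \succ 0$, the principal square root of a positive definite matrix is positive definite, and a positive definite matrix has strictly positive diagonal entries. You simply spell out the steps the paper only asserts, namely the quadratic-form check, the spectral construction of $A^{1/2}$, and the evaluation $e_i^\top A^{1/2} e_i > 0$.
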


\begin{proof}
Let $X = D_{v^{(k)}}^{1/2} M D_{v^{(k)}}^{1/2}$. Since $v^{(k)} > 0$ element-wise, $D_{v^{(k)}}^{1/2}$ is a non-singular diagonal matrix. Because $M \succ 0$, the congruence transformation $X$ is also symmetric positive definite ($X \succ 0$). The unique positive definite square root $X^{1/2}$ is also strictly positive definite. Since the diagonal entries of any strictly positive definite matrix are strictly positive, $v^{(k+1)}_i = [X^{1/2}]_{ii} > 0$ for all $i$.
\end{proof}

Counterintuitive as it may seem given its triviality, we will begin by analyzing the scalar case. Surprisingly, using the nuclear norm representation above, we will be able to, in effect, reduce to it.

\subsection{The Scalar Case ($N=1$)}

To build intuition, we first analyze the scalar case where $M = 1$ and $D_v = x$. The potential simplifies to $J(x) = 2\sqrt{x} - x$. The update rule is $x_{new} = \sqrt{x}$.

Let $\Delta J = J(x_{new}) - J(x)$. Substituting $x_{new} = x^{1/2}$:
\begin{align*}
    \Delta J &= (2 x^{1/4} - x^{1/2}) - (2 x^{1/2} - x) \\
             &= 2 x^{1/4} - 3 x^{1/2} + x
\end{align*}
To analyze the sign of $\Delta J$, we perform a change of variables $x = z^4$ (with $z > 0$). This transforms the expression into a polynomial in $z$:
\begin{equation}\label{eq:scalar_ascent}
    \Delta J(z) = z^4 - 3z^2 + 2z
\end{equation}
By inspecting the root at $z=1$ (the fixed point), we can factorize the polynomial:
\begin{equation}
    \Delta J(z) = z(z-1)^2(z+2)
\end{equation}
Since $z > 0$, we have $z > 0$, $(z-1)^2 \ge 0$, and $(z+2) > 0$. Thus, $\Delta J \ge 0$, with equality if and only if $z=1$ ($x=1$). This factorization confirms unconditional monotonicity for the scalar case.

\subsection{The Matrix Case}

\begin{theorem}
For positive-definite $M$, the algorithm defined by iterating the map $\phi$,

\begin{equation}
    \phi(v) = \diag\left( \left( D_v^{1/2} M D_v^{1/2} \right)^{1/2} \right),
\end{equation}

converges to the unique fixed point of $\phi$ for any initialization in $\mathbb{R}^N_{++}$.
\end{theorem}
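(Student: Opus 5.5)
The plan is to prove the monotone ascent property $J(\phi(v)) \ge J(v)$, with equality only at the fixed point, by a minorize–maximize argument built on \cref{lem:var_nuc}. Convergence then follows by a standard Lyapunov argument combined with \cref{lem:concave} and coercivity.

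\textbf{Step 1 (surrogate).} Write $\Tr((D_v^{1/2} M D_v^{1/2})^{1/2}) = \|M^{1/2} D_v^{1/2}\|_*$. By \cref{lem:var_nuc},
\[
J(w) \ge 2\Tr\bigl(U^\top M^{1/2} D_w^{1/2}\bigr) - \sum_i w_i =: Q_U(w)
\]
for every orthogonal $U$ and every $w$, with equality when $U$ is the polar factor of $M^{1/2}D_w^{1/2}$. Fix the current iterate $v$ and let $U_v$ be the polar factor of $A_v = M^{1/2}D_v^{1/2}$, so that $Q_{U_v}(v)=J(v)$.

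\textbf{Step 2 (diagonal reduction).} The surrogate separates across coordinates:
\[
Q_{U_v}(w)=\sum_i \bigl(2 c_i\sqrt{w_i} - w_i\bigr), \qquad c_i = (U_v^\top M^{1/2})_{ii}.
\]
This is exactly the scalar case, coordinatewise. Each term is maximized at $w_i = c_i^2$. I will compute $c$. We have $U_v^\top A_v = P = (A_v^\top A_v)^{1/2} = (D_v^{1/2} M D_v^{1/2})^{1/2}$, hence $U_v^\top M^{1/2} = P D_v^{-1/2}$. Therefore $c_i = P_{ii}/\sqrt{v_i} = \phi(v)_i/\sqrt{v_i}$.

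The surrogate maximizer is therefore $c_i^2 = \phi(v)_i^2/v_i$, which is not $\phi(v)$. This is the main obstacle: the update $\phi$ is not the exact MM step. I still expect ascent at $w=\phi(v)$, though. Put $w_i = \phi_i$ and $t_i=\sqrt{v_i}$. Each coordinate of $Q_{U_v}(w) - Q_{U_v}(v)$ then becomes
\[
2\tfrac{\phi_i}{t_i}\sqrt{\phi_i} - \phi_i - 2\phi_i + t_i^2 .
\]
Setting $\phi_i = s^2$, this equals $t^{-1}(2s^3 - 3s^2 t + t^3) = t^{-1}(s-t)^2(2s+t) \ge 0$. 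This is the same factorization pattern as \eqref{eq:scalar_ascent}, which suggests the scalar section was written with this reduction in mind. So $J(\phi(v)) \ge Q_{U_v}(\phi(v)) \ge Q_{U_v}(v) = J(v)$. Equality holds iff $\phi(v)_i = v_i$ for all $i$. If this factorization fails to appear, my fallback is to interpret $\phi$ as a geometric-mean-damped MM step, $\phi_i = \sqrt{v_i\cdot c_i^2}$, and invoke concavity of each scalar term along that path.

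\textbf{Step 3 (convergence).} By coercivity the superlevel set $\{J \ge J(v^{(0)})\}$ is bounded. I also need it to stay away from the boundary of $\R_{+}^N$. For this I would note that if $v_i\to 0$ then $\phi(v)_i \approx \sqrt{v_i M_{ii}}\cdot(\ldots)$ dominates $v_i$, or argue directly from the strict inequality above. The iterates therefore lie in a compact set $K\subset\R_{++}^N$ on which $\phi$ is continuous. $J(v^{(k)})$ is increasing and bounded, so it converges. Any limit point $\bar v$ satisfies $J(\phi(\bar v)) = J(\bar v)$ by continuity, and the equality case forces $\phi(\bar v)=\bar v$. Fixed points are exactly the stationary points of $J$. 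By \cref{lem:concave} there is exactly one, the unique maximizer $v^*$, so every limit point equals $v^*$ and the whole sequence converges.

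The delicate point in this step is the boundary: showing limit points cannot have zero coordinates, since $J$ is only bounded, not $-\infty$, there. I would handle it with the per-coordinate gain $t^{-1}(s-t)^2(2s+t)$. When $v_i\to0$ while $\phi_i$ stays bounded below, this gain blows up, which contradicts the convergence of $J(v^{(k)})$.
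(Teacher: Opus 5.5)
Your Steps 1 and 2 follow the paper's ascent argument. You use the same frozen-polar-factor surrogate and the same separable form, and your factorization $t^{-1}(s-t)^2(2s+t)$ is the paper's $c^2 z(z-1)^2(z+2)$ under $z=t/s$. Your reading of $c_i=\phi(v)_i/\sqrt{v_i}$ directly from $U_v^\top M^{1/2}=PD_v^{-1/2}$ is more direct than the paper's gradient matching.

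The gap is in Step 3, at the boundary step you flag yourself. The argument you commit to needs $\phi(v)_i$ to stay bounded below while $v_i\to 0$, and this never happens. With $X=D_v^{1/2}MD_v^{1/2}$ one has $(X^{1/2})_{ii}^2\le\sum_j (X^{1/2})_{ij}^2=X_{ii}=M_{ii}v_i$. So $s=\sqrt{\phi(v)_i}=O(v_i^{1/4})$ while $t=v_i^{1/2}$, and the per-coordinate gain $t^{-1}(s-t)^2(2s+t)$ is $O(v_i^{1/4})$, which tends to $0$ rather than blowing up. No contradiction with the convergence of $J(v^{(k)})$ arises.

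The step cannot be skipped either. The continuous extension of $\phi$ to the closed orthant has fixed points on the boundary: $v=0$, or any $v$ with $v_S=0$ whose restriction to $S^c$ is the fixed point of the problem for $M_{S^cS^c}$. At such a limit point $J(\phi(\bar v))=J(\bar v)$ holds trivially. Your equality-case argument then says nothing, since it needs $t>0$.

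Your other suggestion, that $\phi(v)_i$ dominates $v_i$ when $v_i$ is small, is the right one. It needs a lower bound of order $\sqrt{v_i}$ that is uniform in the other coordinates, and such a bound holds:
\begin{itemize}
\item $D_v^{1/2}MD_v^{1/2}\succeq\lambda_{\min}(M)D_v$.
\item Operator monotonicity of $x\mapsto x^{1/2}$ then gives $(D_v^{1/2}MD_v^{1/2})^{1/2}\succeq\sqrt{\lambda_{\min}(M)}\,D_v^{1/2}$.
\item Comparing diagonal entries yields $\phi(v)_i\ge\sqrt{\lambda_{\min}(M)\,v_i}$, equivalently $c_i\ge\sqrt{\lambda_{\min}(M)}$.
\end{itemize}
Hence $\phi(v)_i\ge\min(v_i,\lambda_{\min}(M))$, and by induction $v^{(k)}_i\ge\min(v^{(0)}_i,\lambda_{\min}(M))>0$ for all $k$. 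Combined with coercivity, this gives your compact set $K\subset\R_{++}^N$, and the rest of Step 3 goes through as written. The paper only asserts this step (``as can be shown simply''), so with this bound your argument would be more complete than the paper's.
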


\begin{proof}

In the general matrix case, $D_v$ and $M$ do not commute. Consequently, we cannot apply the scalar polynomial factorization directly to the eigenvalues. Note that the fixed point of $\phi$ is exactly the maximizer of the potential~\cref{eq:potential}. To prove convergence of the iterates of $\phi$, we employ the \textbf{Majorization-Minimization (MM)} principle by constructing a variational surrogate function: we will show that this surrogate function lower bounds the potential $J$, and iterating $\phi$ makes strict progress on this surrogate function every step, and thus that the iterates make monotonic progress on the potential. Elementary analysis allows us to conclude that the iterates indeed converge to $v^*$: given coercivity, since we stay away from the boundary (IE, no $v_i \to 0$ under this iteration, as can be shown simply), we can extract a subsequence which converges to a limit point, but this could be nothing but a fixed point. And monotonic ascent tells us that we cannot leave a neighborhood once we have entered it. Alternatively, we may invoke Zangwill's theorem~\cite{zangwill1969nonlinear} to conclude.

Thus we proceed with the MM framework.

\subsubsection{Step 1: Construction of the Minorizer (Surrogate Function)}\label{sec:minorizer_construction}

Let $A(v) =  M^{1/2} D_v^{1/2}$. By~\cref{lem:var_nuc}, the first term of our potential is:
\begin{equation}
    2 \Tr\left( (D_v^{1/2} M D_v^{1/2})^{1/2} \right) = 2 \| M^{1/2} D_v^{1/2} \|_* = \sup_{U \in \mathcal{U}(N)} 2 \Tr\left( U^\top M^{1/2} D_v^{1/2} \right)
\end{equation}

Let $v^{(k)}$ be the current iterate. Define the matrix $A_k = M^{1/2} D_{v^{(k)}}^{1/2} $. We compute the polar decomposition $A_k = U_k P_k $, where $P_k = (A_k^\top  A_k)^{1/2}$ is positive semidefinite and $U_k$ is unitary. Note that strictly $U_k$ aligns the singular vectors; implicitly this solves the variational problem $\sup_U \Tr(U^\top A_k)$.

We define the surrogate function $G(v; v^{(k)})$ by fixing the unitary $U_k$:
\begin{equation}
    G(v; v^{(k)}) = 2 \Tr\left( U_k^\top M^{1/2} D_v^{1/2}  \right) - \sum_{i=1}^N v_i
\end{equation}

\textbf{Properties of the Surrogate:}
\begin{enumerate}
    \item \textbf{Lower Bound:} By the variational definition of the nuclear norm, $J(v) \ge G(v; v^{(k)})$ for all $v$.
    
    \item \textbf{Tightness (Value Match):} At $v = v^{(k)}$, the optimal unitary for the variational form is precisely $U_k$. Thus, $J(v^{(k)}) = G(v^{(k)}; v^{(k)})$.
    
    \item \textbf{Tangency (Gradient Match):} We verify that the surrogate shares the same first-order behavior as the objective at $v^{(k)}$.
    
    The gradient of the original potential $J(v)$ is:
    \begin{equation}
        \nabla J(v) = \diag\left( (D_v^{1/2} M D_v^{1/2})^{1/2} \right) \oslash v - \mathbf{1}
    \end{equation}
    At $v^{(k)}$, note that $(D_{v^{(k)}}^{1/2} M D_{v^{(k)}}^{1/2})^{1/2} = (A_k^\top A_k)^{1/2} = P_k$. Using the polar decomposition $P_k =U_k^\top A_k$:
    \begin{equation}
        \nabla J(v^{(k)}) = \diag(U_k^\top A_k) \oslash v^{(k)} - \mathbf{1}
    \end{equation}
    
    Now, consider the gradient of the surrogate $G(v; v^{(k)})$. The trace term is linear in $D_v^{1/2}$:
    \begin{equation}
        2 \Tr(U_k^\top M^{1/2} D_v^{1/2}) = 2 \sum_{i} [U_k^\top M^{1/2}]_{ii} \sqrt{v_i}
    \end{equation}
    The gradient with respect to $v_i$ is:
    \begin{equation}
        \frac{\partial G}{\partial v_i} = \frac{[ U_k^\top M^{1/2} ]_{ii}}{\sqrt{v_i}} - 1
    \end{equation}
    Evaluating at $v^{(k)}$, we substitute $[U_k^\top M^{1/2} ]_{ii}$ back into matrix form. Note that, as $A_k = M^{1/2} D_{v^{(k)}}^{1/2}$, we have $\diag( U_k^\top A_k )_i =  [U_k^\top M^{1/2}]_{ii} \sqrt{v_i^{(k)}}$. Thus:
    \begin{equation}
        \nabla G(v^{(k)}; v^{(k)}) = \frac{\diag(U_k^\top A_k )}{\sqrt{v^{(k)}} \odot \sqrt{v^{(k)}}} - \mathbf{1} = \nabla J(v^{(k)})
    \end{equation}
    This confirms $v^{(k)}$ is a valid touch-point for Majorization-Minimization.
\end{enumerate}

\subsubsection{Step 2: Monotonic Ascent via the Surrogate}

To prove monotonic ascent, we analyze the behavior of the surrogate function constructed in the previous step. Recall that the surrogate, constructed by freezing the unitary $U_k$ from the polar decomposition of $M^{1/2} D_{v^{(k)}}^{1/2}$, is given by:
\begin{equation}\label{eq:g_diag}
    G(v; v^{(k)}) = \sum_{i=1}^N \left( 2 c_i \sqrt{v_i} - v_i \right)
\end{equation}
where the coefficients are $c_i = \left[U_k^\top  M^{1/2}  \right]_{ii}$.

\paragraph{The Surrogate Maximizer vs. The Algorithm Update.}
The global maximizer of this separable surrogate, denoted $v^{opt}$, is found by setting the derivative to zero:
\begin{equation}
    \frac{\partial G}{\partial v_i} = \frac{c_i}{\sqrt{v_i}} - 1 = 0 \implies v_i^{opt} = c_i^2
\end{equation}
However, we must verify how this relates to our actual fixed-point update $\phi(v^{(k)})$.
The gradient of the true potential $J(v)$ at $v^{(k)}$ is given by $\nabla J(v) = \phi(v) \oslash v - \mathbf{1}$.
The gradient of the surrogate $G(v)$ at $v^{(k)}$ is $\nabla G(v) = c \oslash \sqrt{v} - \mathbf{1}$.
By the tangency condition above, $\nabla J(v^{(k)}) = \nabla G(v^{(k)}; v^{(k)})$. Equating the terms reveals the explicit form of the update in terms of the surrogate coefficients:
\begin{equation}\label{eq:v_opt_ref}
    \frac{\phi(v^{(k)})_i}{v^{(k)}_i} = \frac{c_i}{\sqrt{v^{(k)}_i}} \implies v^{(k+1)}_i = c_i \sqrt{v^{(k)}_i}
\end{equation}
Substituting $c_i = \sqrt{v_i^{opt}}$, we observe that the algorithm update is the \textbf{element-wise geometric mean} of the current iterate and the surrogate's optimum:
\begin{equation}
    v^{(k+1)}_i = \sqrt{v^{(k)}_i \cdot v_i^{opt}}
\end{equation}

\paragraph{Proof of Ascent.}
Since the algorithm does not jump strictly to the maximizer $v^{opt}$ but instead takes a damped step, we must show that this step still increases the surrogate value.
Let $\Delta G = G(v^{(k+1)}) - G(v^{(k)})$. Since the terms separate, we analyze the scalar contribution for a single index $i$ (omitting the subscript for brevity). Let $v$ be the current value and $c$ be the coefficient.
The update is $v_{new} = c\sqrt{v}$.
\begin{align*}
    \Delta G_{scalar} &= \left( 2 c \sqrt{v_{new}} - v_{new} \right) - \left( 2 c \sqrt{v} - v \right) \\
    &= \left( 2 c \sqrt{c\sqrt{v}} - c\sqrt{v} \right) - 2 c \sqrt{v} + v \\
    &= 2 c^{3/2} v^{1/4} - 3 c v^{1/2} + v
\end{align*}
And we see the relation to the scalar case above. Again we substitute variables. Let $v = c^2 z^4$ (for $z > 0$). This implies $v^{1/2} = c z^2$ and $v^{1/4} = c^{1/2} z$.
\begin{align*}
    \Delta G_{scalar} &= 2 c^{3/2} (c^{1/2} z) - 3 c (c z^2) + c^2 z^4 \\
    &= c^2 \left( 2z - 3z^2 + z^4 \right)
\end{align*}
We must factorize the polynomial $P(z) = z^4 - 3z^2 + 2z$. But note, we already did this, in the scalar case.
Thus, referring to~\cref{eq:scalar_ascent}, we have
\begin{equation}
    \Delta G_{scalar} = c^2 \cdot z \cdot (z-1)^2 \cdot (z+2)
\end{equation}
Since $c, z > 0$ (where $c > 0$ relies on positive-definiteness of $M$) and $(z-1)^2 \ge 0$, we have $\Delta G \ge 0$ unconditionally.

Combining these results yields the ascent inequality chain:
\begin{equation}
    J(v^{(k+1)}) \ge G(v^{(k+1)}; v^{(k)}) \ge G(v^{(k)}; v^{(k)}) = J(v^{(k)})
\end{equation}
This confirms that the update rule guarantees monotonic ascent of the objective function, with strict increase whenever $v^{(k)}$ is not a fixed point (as, by~\cref{eq:v_opt_ref}, $z=1$ again corresponds to the fixed point).
\end{proof}

\textbf{Remark:} This same proof can show linear (exponential) convergence to the objective. But this was already shown asymptotically (IE, close to the fixed point) by~\cite{denisov}, and convergence rates are typically formulated asymptotically. 

\subsection{Jumping to the optimum: connections to optimal transport.}\label{sec:opt_transport}

One may ask, what happens if we instead jump to the \emph{exact} optimum of the surrogate function? Clearly such an algorithm must similarly converge globally, by an identical argument. And since we computed the derivative of $G$ exactly above, we may immediately specify this version.

\begin{prop}\label{prop:max_minorizer}

The algorithm defined by:

$$
v_{\text{new}} = \phi(v)^2 \oslash v
$$

is similarly globally convergent to the unique fixed point of $\phi$.
\end{prop}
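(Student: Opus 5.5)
We reuse the minorizer of Step 1 verbatim and replace only the update. Fix $v^{(k)} \in \R_{++}^N$, let $U_k$ be the orthogonal polar factor of $A_k = M^{1/2} D_{v^{(k)}}^{1/2}$, and set $c_i = [U_k^\top M^{1/2}]_{ii}$. By \cref{eq:v_opt_ref} we have $c_i = \phi(v^{(k)})_i / \sqrt{v^{(k)}_i}$. The separable surrogate \cref{eq:g_diag} is therefore maximized at $v^{opt}_i = c_i^2 = \phi(v^{(k)})_i^2 / v^{(k)}_i$, which is exactly the proposed update $v_{\text{new}} = \phi(v)^2 \oslash v$.

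\textbf{Positivity of the iterates.} The new iterate lies in $\R_{++}^N$. Indeed, $\phi(v^{(k)}) \in \R_{++}^N$ by Positive Invariance, so each $c_i > 0$ and hence $v^{opt}_i > 0$.

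\textbf{Monotone ascent.} Each scalar term $2c_i\sqrt{v_i} - v_i$ is strictly concave on $(0,\infty)$ and is uniquely maximized at $v_i = c_i^2$. This gives the ascent chain
\[
J(v^{(k+1)}) \ge G(v^{(k+1)}; v^{(k)}) \ge G(v^{(k)}; v^{(k)}) = J(v^{(k)}).
\]
The middle inequality is strict unless $v^{(k)}_i = c_i^2$ for every $i$, that is, unless $v^{(k)}_i = \phi(v^{(k)})_i$ for all $i$. So ascent is strict away from fixed points of $\phi$. Quantitatively, each coordinate gains $(\sqrt{v_i}-c_i)^2$, so
\[
J(v^{(k+1)}) - J(v^{(k)}) \ge \sum_i \left(\sqrt{v^{(k)}_i} - c_i\right)^2 .
\]
Also, the fixed points of the new map $\psi(v) = \phi(v)^2 \oslash v$ on $\R_{++}^N$ coincide with those of $\phi$.

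\textbf{From ascent to convergence.} Since $J(v^{(k)})$ is nondecreasing, all iterates lie in the superlevel set $\{J \ge J(v^{(0)})\}$, which is bounded by Coercivity. The step I expect to be the main obstacle is ruling out accumulation on $\partial\R_{+}^N$, where $J$ is still finite. I would handle it in two steps.
\begin{enumerate}
\item Summing the gain bound shows $\sum_k \sum_i (\sqrt{v^{(k)}_i} - c^{(k)}_i)^2 < \infty$, so $v^{(k+1)} - v^{(k)} \to 0$ along the bounded sequence.
\item I would then show coordinates cannot collapse. Since $v^{(k+1)}_i / v^{(k)}_i = (c_i/\sqrt{v^{(k)}_i})^2$, it suffices to show that $c_i/\sqrt{v_i} = \phi(v)_i / v_i \ge 1$ whenever $v_i$ is small and $v$ is bounded. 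This follows from the gradient formula: $\partial J/\partial v_i = \phi(v)_i/v_i - 1$, and $\partial J/\partial v_i \to +\infty$ as $v_i \to 0$. To see this, note $\phi(v)_i \gtrsim \sqrt{v_i}$, because $[X^{1/2}]_{ii} \ge \sqrt{X_{ii}}\,\lambda_{\min}(X)^{1/2}/\lambda_{\max}(X)^{1/2}$-type bounds apply to the $i$-th row of $D_v^{1/2} M D_v^{1/2}$; alternatively one can use $\phi(v)_i \ge \sqrt{v_i\,\lambda_{\min}(M)}$ via $X^{1/2} \succeq X/\lambda_{\max}(X)^{1/2}$ and $X_{ii} \ge v_i \lambda_{\min}(M)\cdot$.
\end{enumerate}
Step 2 means that small coordinates increase, so $\min_i v^{(k)}_i$ stays bounded below. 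The iterates therefore remain in a compact subset $K \subset \R_{++}^N$ on which $\psi$ is continuous.

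\textbf{Conclusion.} Any limit point $\bar v$ of the iterates satisfies $\psi(\bar v) = \bar v$, by continuity of $\psi$ on $K$ together with strict ascent (equivalently, Zangwill's theorem). Hence $\bar v$ is a fixed point of $\phi$, i.e.\ a stationary point of $J$. By Strict Concavity, $\bar v = v^*$. Since every limit point equals $v^*$ and the sequence is bounded, the whole sequence converges to $v^*$.
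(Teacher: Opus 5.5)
Your route is the paper's. You identify $\phi(v)^2\oslash v$ as the exact maximizer $c^2$ of the frozen-polar-factor surrogate and rerun the MM ascent chain. The paper's proof consists only of that identification and defers convergence to the ``identical argument'' of the main theorem. Your explicit gain bound $\sum_i(\sqrt{v_i}-c_i)^2$, the fixed-point equivalence, and the limit-point argument are correct.

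The one step that fails as written is the boundary-avoidance inequality $\phi(v)_i\gtrsim\sqrt{v_i}$, which you rightly flag as the crux. Neither derivation you offer produces it. From $X^{1/2}\succeq X/\lambda_{\max}(X)^{1/2}$ and $X_{ii}=v_iM_{ii}\ge v_i\lambda_{\min}(M)$ you only get $\phi(v)_i\ge v_i\lambda_{\min}(M)/\lambda_{\max}(X)^{1/2}$, which is \emph{linear} in $v_i$. The bound $\sqrt{X_{ii}}\,(\lambda_{\min}(X)/\lambda_{\max}(X))^{1/2}$ also degenerates, because $\lambda_{\min}(X)\le X_{ii}=v_iM_{ii}\to 0$, so it too is $O(v_i)$. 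A linear lower bound only gives $\phi(v)_i/v_i\ge\kappa$ for a constant $\kappa$ that may be below $1$. It therefore does not force $\partial J/\partial v_i\to+\infty$. It also does not rule out geometric collapse under $v_i\mapsto\phi(v)_i^2/v_i\ge\kappa^2 v_i$. So your Step 2, and with it compactness of the iterates in $\R_{++}^N$, is not established by what you wrote.

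The inequality is true, with a one-line proof.
\begin{itemize}
\item Since $M\succeq\lambda_{\min}(M)I$, congruence gives $X=D_v^{1/2}MD_v^{1/2}\succeq\lambda_{\min}(M)D_v$.
\item Operator monotonicity of $t\mapsto t^{1/2}$ (L\"owner--Heinz) gives $X^{1/2}\succeq\lambda_{\min}(M)^{1/2}D_v^{1/2}$.
\item Comparing diagonal entries gives $\phi(v)_i\ge\sqrt{\lambda_{\min}(M)\,v_i}$.
\end{itemize}
For this update the bound gives more than you need: $\psi(v)_i=\phi(v)_i^2/v_i\ge\lambda_{\min}(M)$ for every $v\in\R_{++}^N$. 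Hence from $k=1$ on, all iterates lie in the compact set $\{J\ge J(v^{(0)})\}\cap[\lambda_{\min}(M),\infty)^N\subset\R_{++}^N$. Your Step 1 and the ``small coordinates increase'' bookkeeping then become unnecessary. With this replacement the rest of your argument goes through.
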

\begin{proof}
    Here it is enough to show that this specification zeroes out the gradient of $G$ as a function of its first argument. But this is immediate; zeroing out the gradient of~\cref{eq:g_diag} in $v$ is equivalent to simultaneously solving the equations
    $$\frac{c_i}{\sqrt{v_i}} - 1 = 0$$
    for all $i$, where $c_i$ is defined in step 2 above. That is, letting $v_i = c_i^2$. Because $\phi(v)_i = c_i \sqrt{v_i}$, we see:

    $$
    v_{\text{new}} = \textbf{c} ^2 = \frac{(\textbf{c} \odot \sqrt{v})^2}{v} = \frac{\phi(v)^2}{v}
    $$
\end{proof}

~\cref{prop:max_minorizer}, however, is in fact derivable from the literature in optimal transport and quantum information theory.

Observe that our potential function $J(v)$ in~\cref{eq:potential} is deeply related to the squared \emph{Bures-Wasserstein distance} (which corresponds to the 2-Wasserstein distance between centered Gaussian distributions, and the Bures metric between quantum states). For two symmetric positive definite matrices $A$ and $B$, this distance is defined as~\cite{bhatia2019bures}:
\begin{equation}
    d_{BW}^2(A, B) = \Tr(A) + \Tr(B) - 2 \Tr\left( (A^{1/2} B A^{1/2})^{1/2} \right)
\end{equation}
Setting $A = D_v$ and $B = M$, we find:
\begin{equation}
    d_{BW}^2(D_v, M) = \sum_{i=1}^N v_i + \Tr(M) - 2 \Tr\left( \left( D_v^{1/2} M D_v^{1/2} \right)^{1/2} \right) = \Tr(M) - J(v)
\end{equation}
Because $\Tr(M)$ is a constant, maximizing $J(v)$ over the positive orthant is mathematically equivalent to computing the Bures-Wasserstein projection of the matrix $M$ onto the manifold of diagonal matrices. In quantum information theory, this is exactly the problem of finding the closest classical (or incoherent) state to a given density matrix $M$, a foundational calculation for quantifying quantum coherence~\cite{baumgratz2014quantifying}.

In the optimal transport literature, minimizing the Bures-Wasserstein distance is typically achieved via Wasserstein Gradient Descent (WGD)~\cite{chewi2020gradient}. To project a Gaussian with covariance $D_v$ onto a target $M$, the optimal transport map is $T = D_v^{-1/2} (D_v^{1/2} M D_v^{1/2})^{1/2} D_v^{-1/2}$. To constrain the descent to the diagonal manifold, we project this transport map onto the diagonal, yielding $T_{\text{diag}} = \diag(T) = D_v^{-1/2} D_{\phi(v)} D_v^{-1/2}$. The WGD step with a step size of $\eta=1$ applies this map to the current covariance via congruence:
\begin{equation}
    D_{v_{\text{new}}} = T_{\text{diag}} D_v T_{\text{diag}} = D_v^{-1/2} D_{\phi(v)}^2 D_v^{-1/2}
\end{equation}
Extracting the diagonal of this matrix yields exactly $v_{\text{new}} = \phi(v)^2 \oslash v$. Because WGD with a unit step size is well-studied and geometrically supported on the Bures-Wasserstein manifold, its rapid global convergence is firmly guaranteed by existing literature~\cite{chewi2020gradient}.

\paragraph{Does existing literature imply the convergence of the original algorithm?}

The short answer is no (as far as we are aware), but it also highlights that the original formulation in~\cite{denisov} was likely suboptimal.

As we noted above, there is a simple relationship between the algorithm of~\cite{denisov} and the WGD step:
\begin{equation}
    v_{\text{orig}} = \phi(v) = \sqrt{ v \odot \frac{\phi(v)^2}{v} } = \sqrt{ v \odot v_{\text{new}} }
\end{equation}
The original iteration $v \leftarrow \phi(v)$ is precisely the \emph{element-wise geometric mean} of the current iterate and the optimal jump. In the Log-Euclidean geometry of positive definite matrices, this corresponds to an under-relaxed step with a size of $\eta = 1/2$. 

While the optimal transport literature provides deep theoretical guarantees that the full WGD step ($\eta=1$) monotonically decreases the Bures-Wasserstein distance, taking a ``damped'' half-step along a Log-Euclidean geodesic does not trivially guarantee strict monotonic ascent of the objective. Standard geodesic convexity arguments fail because the damping occurs in an entirely different geometry than the objective function.

\paragraph{An empirical comment.}

In principle, there is not a clear relationship between the empirics of the algorithm presented in~\cite{denisov} and that of~\cref{prop:max_minorizer}. Empirically, the jump iteration $v \leftarrow \phi(v)^2 \oslash v$ of~\cref{prop:max_minorizer} requires nearly exactly half as many iterations to reach high precision asymptotically compared to the original iteration $v \leftarrow \phi(v)$, though they are equivalent in other respects. 

This factor of two can be derived by examining the Jacobians of the two updates at the fixed point $v^*$. Let $J_\phi$ be the Jacobian of $\phi$ at $v^*$. By differentiating the jump update $\psi(v) = \phi(v)^2 \oslash v$ and evaluating at $v^*$ (where $\phi(v^*) = v^*$), one easily finds its Jacobian to be $J_\psi = 2J_\phi - I$. 

Consequently, if the slowest-decaying error mode of the original algorithm has an eigenvalue $\lambda = 1 - c$, the jump algorithm will contract that same mode with eigenvalue $2(1-c) - 1 = 1 - 2c$. For small $c$, the linear rate constant $-\log(1-2c) \approx 2c$ is exactly double that of $-\log(1-c) \approx c$. 

This relationship is most elegantly seen in the log-domain. Letting $y = \log v$ (element-wise), the jump algorithm takes the form $y_{\text{new}} = y + 2(\log \phi(e^y) - y)$. If we interpret $\log \phi(e^y) - y$ as an implicit gradient direction, jumping to the surrogate's optimum corresponds to taking a full step ($\eta = 1$), whereas the original algorithm is precisely an \emph{under-relaxed} step ($\eta = 1/2$).

In contrast to these two algorithms, which take steps fundamentally influenced by the underlying geometry, directly parameterizing the vector $v$ and taking gradient steps in Euclidean space achieves nowhere near the same asymptotic rate. This echoes the observations of~\cite{denisov}, which observed that even augmented with Newton steps, Euclidean gradient descent could not match the convergence of the original iterations of $\phi$.

\begin{figure}[htbp]
    \centering
    \includegraphics[width=0.8\textwidth]{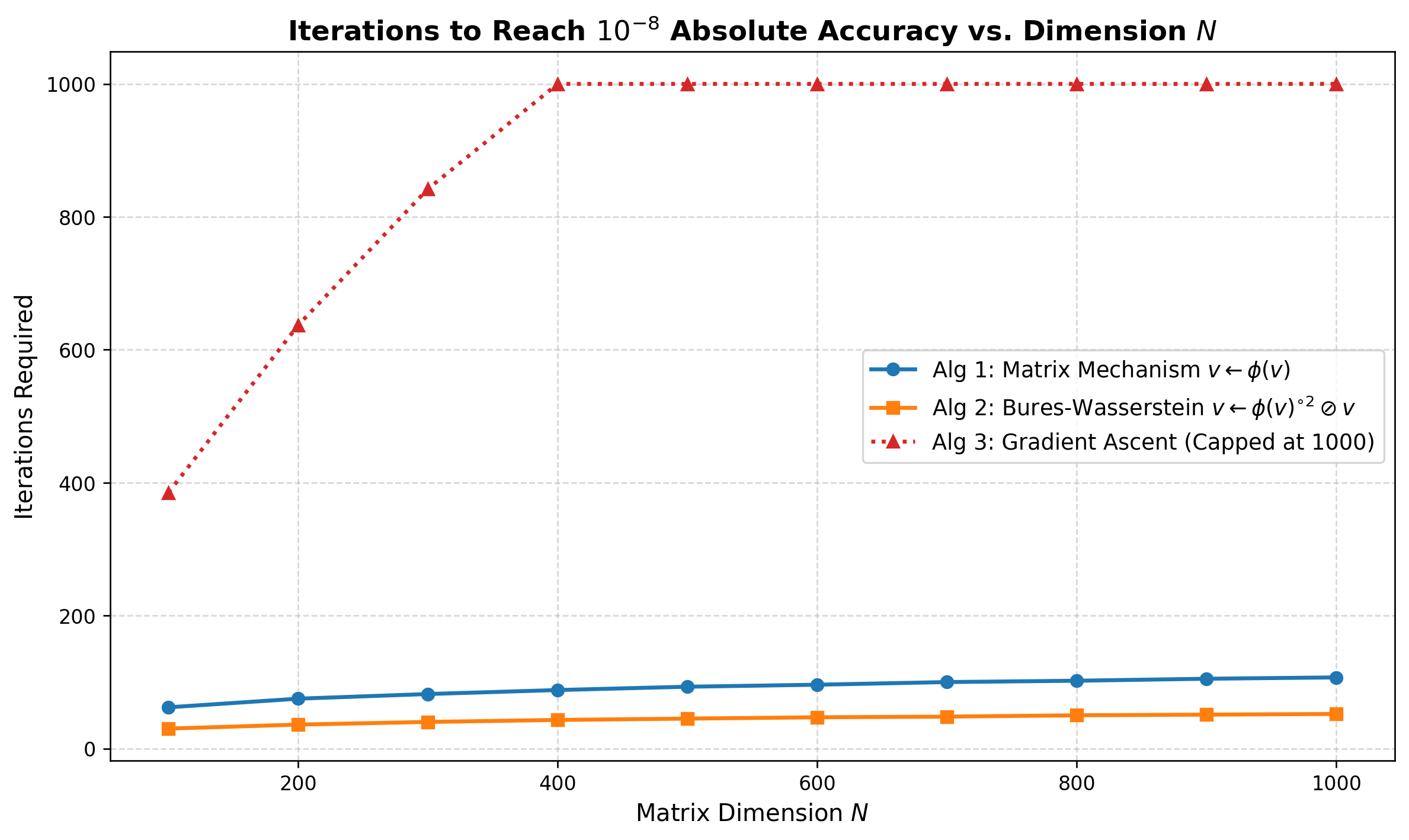}
    \caption{Comparison of steps to reach $1e-8$ accuracy between the original iteration of~\cite{denisov}, that of~\cref{prop:max_minorizer} (here called ``Bures-Wasserstein''), and gradient ascent equipped with Armijo step-size selection and backtracking.}
    \label{fig:iterations_phi_ot}
\end{figure}

~\cref{fig:iterations_phi_ot} shows the empirical relationships between the Denisov iteration, that of~\cref{prop:max_minorizer}, and Euclidean gradient ascent augmented with Armijo step-size selection and backtracking. 

\section{Conclusion}

I don't consider the result shown here to be particularly noteworthy; I view it as the closure of a small problem that appeared in one of my past papers. It was something of a shame to me that we did not prove global convergence initially, despite spending a nontrivial amount of time searching for a surrogate function to fit in the MM framework (and some reasonably heavy analysis by Sergey Denisov which showed $\phi$ was a local contraction around its fixed point). The key fact we were missing was the representation~\cref{lem:var_nuc}, which, as we saw above, allowed us to effectively diagonalize the problem. The effectiveness of a potential method of ``diagonalizing the problem'' was quite clear to us; but like most ex-mathematicians working in industry, I would be hard-pressed to recall every matrix equation I have ever seen.

I have a deep and abiding love for mathematics; I entered computing because I believed that it would be a significant source of problems and approaches for the mathematics of the 21st century, in roughly the way physics was for so many years. I remember seeing Jordan Ellenberg at the 2025 joint mathematics meetings, and discussing with him my fear that advances in AI would be used primarily for companionship chatbots rather than progress in fundamental mathematics and science. He rightly pointed out to me the slightly strange priorities my comment implied: that I would rather prove more theorems than bring some comfort to lonely people around the world!

There is one particularly interesting feature of the present note from my perspective. I have never worked in optimal transport or quantum information; when I initially asked on MathOverflow if this iteration had been studied~\cite{rush_mo}, there was an observed connection to so-called ``quantum fidelity'', but making this connection any more precise was not obvious to me. Thus the content of~\cref{sec:opt_transport} is effectively entirely due to the improved pattern-matching capabilities of contemporary AI systems, where one may query in more or less an inverse manner compared to what was previously possible: instead of knowing what you're searching for, we can probe by asking what we \emph{should} be looking for. Indeed, from this perspective, it is perhaps unsurprising that in 2021 and 2022 we stumbled upon an algorithm that was fundamentally related to something from optimal transport, but not identical -- and we had, at the time, no way to easily identify this relationship. 

I initially did this work in December of 2025; since then, several other examples of work of this flavor have appeared (\cite{knuth2026claudecycles,openai2026firstproof,guevara2026gluon,bryan2026motivic}). I am happy to see that greater minds than my own have found utility in the current models in a roughly similar way to me.

I decided to write up this note, rather than simply let sit the fact that I was able to close a not-entirely-trivial problem primarily through prompting of an AI model, in the hope that I might be able to play some small part in bringing the communities of pure mathematics and AI closer in these, the early days of our journey together. 

\section{Acknowledgments}

Gemini 3 made me aware of the representation~\cref{lem:var_nuc}, and sketched the major steps of this proof. I filled in some details and fixed some argumentation (including some subtle gaps), but I consider this proof to have been effectively fully generated by Gemini 3. Moreover, Gemini 3 wrote a significant amount of this note; after some rounds of edits, it is difficult to say exactly how much, but its contribution was substantial.

I owe a debt to Abhradeep Thakurta and Brendan McMahan, with whom I worked on formulating the problem that eventually led here. Certainly also our other co-authors on that first paper, Adam Smith and my advisor, Sergey Denisov.

Finally, thank you to my wife Breanne Lynch, who has provided me with unfailing support, even if I can't quite articulate \emph{why} I need to be typing away at odd hours.

\bibliographystyle{alpha}
\bibliography{references}

@inproceedings{denisov,
 author = {Denisov, Sergey and McMahan, H. Brendan and Rush, John and Smith, Adam and Guha Thakurta, Abhradeep},
 booktitle = {Advances in Neural Information Processing Systems},
 editor = {S. Koyejo and S. Mohamed and A. Agarwal and D. Belgrave and K. Cho and A. Oh},
 pages = {5910--5924},
 publisher = {Curran Associates, Inc.},
 title = {Improved Differential Privacy for SGD via Optimal Private Linear Operators on Adaptive Streams},
 url = {https://proceedings.neurips.cc/paper_files/paper/2022/file/271ec4d1a9ff5e6b81a6e21d38b1ba96-Paper-Conference.pdf},
 volume = {35},
 year = {2022}
}

@article{mm,
  author       = {Chao Li and
                  Gerome Miklau and
                  Michael Hay and
                  Andrew McGregor and
                  Vibhor Rastogi},
  title        = {The matrix mechanism: optimizing linear counting queries under differential
                  privacy},
  journal      = {{VLDB} J.},
  volume       = {24},
  number       = {6},
  pages        = {757--781},
  year         = {2015},
  url          = {https://doi.org/10.1007/s00778-015-0398-x},
  doi          = {10.1007/S00778-015-0398-X},
  bibsource    = {dblp computer science bibliography, https://dblp.org}
}

@misc{pillutla2025correlatednoisemechanismsdifferentially,
      title={Correlated Noise Mechanisms for Differentially Private Learning}, 
      author={Krishna Pillutla and Jalaj Upadhyay and Christopher A. Choquette-Choo and Krishnamurthy Dvijotham and Arun Ganesh and Monika Henzinger and Jonathan Katz and Ryan McKenna and H. Brendan McMahan and Keith Rush and Thomas Steinke and Abhradeep Thakurta},
      year={2025},
      eprint={2506.08201},
      archivePrefix={arXiv},
      primaryClass={cs.LG},
      url={https://arxiv.org/abs/2506.08201}, 
}

@INPROCEEDINGS {blt,
author = { Dvijotham, Krishnamurthy Dj and McMahan, H. Brendan and Pillutla, Krishna and Steinke, Thomas and Thakurta, Abhradeep },
booktitle = { 2024 IEEE 65th Annual Symposium on Foundations of Computer Science (FOCS) },
title = {{ Efficient and Near-Optimal Noise Generation for Streaming Differential Privacy }},
year = {2024},
volume = {},
ISSN = {},
pages = {2306-2317},
doi = {10.1109/FOCS61266.2024.00135},
url = {https://doi.ieeecomputersociety.org/10.1109/FOCS61266.2024.00135},
publisher = {IEEE Computer Society},
address = {Los Alamitos, CA, USA},
month =Oct}

@book{handj,
  asin = {0521386322},
  author = {Horn, Roger A. and Johnson, Charles R.},
  description = {Amazon.com: Matrix Analysis (9780521386326): Roger A. Horn, Charles R. Johnson: Books},
  dewey = {512},
  ean = {9780521386326},
  isbn = {0521386322},
  publisher = {Cambridge University Press},
  title = {Matrix Analysis},
  url = {http://www.amazon.com/Matrix-Analysis-Roger-Horn/dp/0521386322%3FSubscriptionId%3D192BW6DQ43CK9FN0ZGG2%26tag%3Dws%26linkCode%3Dxm2%26camp%3D2025%26creative%3D165953%26creativeASIN%3D0521386322},
  year = 1990
}

@book{zangwill1969nonlinear,
  title={Nonlinear Programming: A Unified Approach},
  author={Zangwill, W.I.},
  isbn={9780136235798},
  lccn={69010606},
  series={Prentice-Hall international series in management},
  url={https://books.google.com/books?id=TWhxLcApH9sC},
  year={1969},
  publisher={Prentice-Hall}
}

@article{GuerraPujol2012,
  author  = {Guerra-Pujol, F. E.},
  title   = {Gödel's Loophole},
  journal = {SSRN Electronic Journal},
  year    = {2012},
  doi     = {10.2139/ssrn.2010183},
  url     = {https://papers.ssrn.com/sol3/papers.cfm?abstract_id=2010183}
}

@article{LIEB1973267,
title = {Convex trace functions and the Wigner-Yanase-Dyson conjecture},
journal = {Advances in Mathematics},
volume = {11},
number = {3},
pages = {267-288},
year = {1973},
issn = {0001-8708},
doi = {https://doi.org/10.1016/0001-8708(73)90011-X},
url = {https://www.sciencedirect.com/science/article/pii/000187087390011X},
author = {Elliott H Lieb}
}

@misc{knuth2026claudecycles,
  author       = {Donald E. Knuth},
  title        = {Claude's Cycles},
  howpublished = {Unpublication available at \url{https://www-cs-faculty.stanford.edu/~knuth/papers/claude-cycles.pdf}},
  month        = feb,
  year         = {2026},
  note         = {Revised March 16, 2026},
  institution  = {Stanford University, Computer Science Department}
}

@misc{openai2026firstproof,
  author       = {{OpenAI}},
  title        = {Our First Proof submissions},
  howpublished = {OpenAI Blog},
  month        = feb,
  year         = {2026},
  url          = {https://openai.com/index/first-proof-submissions/},
  note         = {Published February 20, 2026}
}

@article{guevara2026gluon,
  author       = {Alfredo Guevara and Alexandru Lupsasca and David Skinner and Andrew Strominger and Kevin Weil},
  title        = {Single-minus gluon tree amplitudes are nonzero},
  journal      = {arXiv preprint arXiv:2602.12176},
  year         = {2026},
  month        = feb,
  eprint       = {2602.12176},
  archivePrefix = {arXiv},
  primaryClass = {hep-th},
  url          = {https://arxiv.org/abs/2602.12176},
  note         = {On behalf of OpenAI}
}

@article{bryan2026motivic,
  author       = {Jim Bryan and Balázs Elek and Freddie Manners and George Salafatinos and Ravi Vakil},
  title        = {The motivic class of the space of genus 0 maps to the flag variety},
  journal      = {arXiv preprint arXiv:2601.07222},
  year         = {2026},
  month        = jan,
  eprint       = {2601.07222},
  archivePrefix = {arXiv},
  primaryClass = {math.AG},
  url          = {https://arxiv.org/abs/2601.07222},
  note         = {In collaboration with the Google DeepMind Blueshift team, including Adam Brown and Vinay Ramasesh}
}

@article{bhatia2019bures,
  title={On the {Bures-Wasserstein} distance between positive definite matrices},
  author={Bhatia, Rajendra and Jain, Tanvi and Lim, Yongdo},
  journal={Expositiones Mathematicae},
  volume={37},
  number={2},
  pages={165--191},
  year={2019},
  publisher={Elsevier}
}

@article{baumgratz2014quantifying,
  title={Quantifying coherence},
  author={Baumgratz, Tillmann and Cramer, Marcus and Plenio, Martin B},
  journal={Physical Review Letters},
  volume={113},
  number={14},
  pages={140401},
  year={2014},
  publisher={APS}
}

@inproceedings{chewi2020gradient,
  title={Gradient descent algorithms for {Bures-Wasserstein} barycenters},
  author={Chewi, Sinho and Maunu, Tyler and Rigollet, Philippe and Stromme, Austin J},
  booktitle={Conference on Learning Theory},
  pages={1276--1304},
  year={2020},
  organization={PMLR}
}

@misc{rush_mo,
  author       = {Rush, Keith},
  title        = {Conjecture on convergence of iterated near-matrix square root},
  howpublished = {MathOverflow},
  year         = {2022},
  month        = {jan},
  note         = {URL: \url{https://mathoverflow.net/q/413633} (version: 2023-01-25)},
  url          = {https://mathoverflow.net/q/413633}
}

\appendix

\section{A User's Journey}

Given my background, I am relatively uniquely positioned to effectively interact with current AI systems and produce effective mathematics. At present, it is a nontrivial procedure, and it is not entirely surprising that, with a different mental model of contemporary AI systems, interactions may go nowhere useful. In this section, I will attempt to extract the relevant aspects of my experience resulting in the proof above, and highlight some of the ways the process could have gone wrong.

First, it must be stated that the models that currently exist are generally \emph{incredibly credulous}. Mathematics is a process which requires surfacing all relevant reasoning steps. As much as formal methods indicate that we may be somewhat deceiving ourselves about the tightness of our reasoning, mathematics is, in my experience, the domain with the highest bar of reasoning in human activity.

Thus, keeping in mind the view of LLMs as a ``steerable probability distribution over all human text'' (which is probably correct to first order), it is no surprise that models will tend, from the mathematician's perspective, to leap to conclusions. These leaps, like our leaps, are often incorrect. Currently, models don't demonstrate significant ability to spot their own gaps and critique their own leaps, though I expect this to change in the near future. This leads to the first pattern of effective use that I have found:

\textbf{Principle 1: Push hard on gaps.} Models will often leap over the requirements of strict reasoning. Mathematicians know this is common practice in human interaction, and we must work quite hard to achieve the state of mind that allows for precise reasoning. Mathematicians, then, are often excellent (though far from perfect!) at ``sniffing out'' logical gaps. (In an infamous story, the great logician Kurt G\"{o}del was said to have found an ``internal contradiction'' in the US Constitution which permits the US to legally convert itself from a republic to a dictatorship~\cite{GuerraPujol2012}. Luckily the judge administering the citizenship examination to G\"{o}del declined to pursue the matter.) 

Interacting with an AI with the goal of proving mathematics often stretches this logical-gap-sniffing capacity to the limit -- there are so many! I will give one example that I found in the log of my interaction with Gemini that resulted in this note. After I pointed to~\cite{denisov} and asked for a proof of the convergence of iterates of $\phi$, Gemini responded with:

\speak{Gemini}{...
Scalar Analogy:
Consider the 1D case $J(x) = 2\sqrt{ax} - x$. The iteration is $x_{new} = \sqrt{ax}$.
Let $u = (x/a)^{1/4}$. The gain $\Delta J = J(x_{new}) - J(x)$ factors as:
$$ \Delta J \propto u(u-1)^2(u+2) $$
This is strictly positive for all $u \neq 1$. The matrix case satisfies a similar inequality.}

Clearly, this is insufficient. (Though, as we have seen above, indeed correct). Thus I asked:

\speak{Keith}{
It seems like the crux of your argument is step 3, particularly, "the matrix case satisfies a similar inequality". please cut no corners and carefully show step 3. then: is step 3 a well-known fact? if so, please give a reference.}

Unfortunately, even to this I was left with a ``matrix case satisfies a similar inequality'' response. Gemini even admitted it could not prove this to me, and suggested I try numerics. At this point I opened a new chat, pasting in the argument Gemini gave me as context, asking:

\speak{Keith}{It seems like the crux of the argument to follow is step 3, particularly, "the matrix case satisfies a similar inequality". please cut no corners and carefully show step 3. then: is step 3 a well-known fact? if so, please give a reference.}

Here we finally, in hindsight, started to get closer. But much reasoning remained suppressed:

\speak{Gemini}{... Summary:
The scalar proof ($z(z-1)^2(z+2) \ge 0$) provides the intuition. The matrix case holds because the Trace acts on the singular values, and the non-commutativity terms are handled by the unitary invariance of the Nuclear Norm, effectively "diagonalizing" the problem into the scalar case at the optimal alignment.}

By now we see implicit references to the proof above, but it was quite difficult in the moment to disambiguate a correct hand-waving around unitary invariance from a pure hallucination.

\speak{Keith}{so, you haven't given me  a proof, right? Please try harder.}

This sequence of interactions yielded the representation of the nuclear norm we have in this note, and the basic outline of an MM argument. With a few more iterations and fixing a few more gaps (initially, Gemini claimed that the iterations of $\phi$ jumped directly to the maximum of the surrogate; upon my attempt to rewrite the logic, I discovered this was incorrect), we had the result you see above. But the interaction above contained a crucial easy-to-miss step, which can serve as our second principle:

\textbf{Principle 2: Beware of your context.} Transformer-based LLMs typically attend to their context (in this case, the chat history) on every turn. This means that ideas present in the context can in some sense ``pollute'' new outputs, even if you are hoping to steer the model away from these ideas. Remember, an LLM is in ``just'' a steerable distribution over text, with the context serving as the conditioning variables for the generation process. Thus, the model is generating for you text that is likely to follow the context, presumably modulated by some posttraining to steer it towards free-flowing interactions. (I do not work in posttraining, so I do not have intimate familiarity with such details). But for this reason, things in the context can be very hard to make the model ignore. If it has responded to you with an answer you don't like multiple times, you may be better off manually curating the context by starting a new chat, and feeding in your summary of the things you want the model to know, filtering out the things you wish for it to avoid.

One aspect of this interaction which I suppressed above was the generation of colab snippets to check that we could not generate counterexamples to the claimed monotonic ascent. The validity of these snippets (plus my recollection that this was empirically true during the writing of~\cite{denisov}) served as a critical grounding in continuing to push the model towards a proof. Here Gemini was reasonably inventive, numerically attempting to \emph{minimize the gain} in the vector $v$. Gemini generated the colab to follow, which I then altered slightly while running, to ensure the claimed ascent was not simply a low-dimensional phenomenon.

\begin{lstlisting}
import numpy as np
from scipy.linalg import eigh
from scipy.optimize import minimize
import matplotlib.pyplot as plt

def get_sym_sqrt(A):
    vals, vecs = eigh(A)
    # Numerical stability: Clip small eigenvalues
    vals = np.maximum(vals, 1e-14)
    return vecs @ np.diag(np.sqrt(vals)) @ vecs.T

def potential_J(v, M):
    v_sqrt = np.sqrt(v)
    D_sqrt = np.diag(v_sqrt)
    Inner = D_sqrt @ M @ D_sqrt
    
    # Tr(A^{1/2}) is sum(sqrt(eigenvalues(A)))
    vals = np.linalg.eigvalsh(Inner)
    vals = np.maximum(vals, 1e-14)
    
    return 2 * np.sum(np.sqrt(vals)) - np.sum(v)

def phi_map(v, M):
    v_sqrt = np.sqrt(v)
    D_sqrt = np.diag(v_sqrt)
    Inner = D_sqrt @ M @ D_sqrt
    S = get_sym_sqrt(Inner)
    return np.diag(S)

def run_adversarial_search(N=3):

    # Parameterize M = L L^T (to ensure PD) and v = exp(u) (to ensure v > 0)
    # Params: [u (size N), L_flat (size N*N)]
    
    def pack(v, L):
        return np.concatenate([np.log(v), L.flatten()])
    
    def unpack(params):
        u = params[:N]
        L_flat = params[N:]
        v = np.exp(u)
        L = L_flat.reshape(N, N)
        M = L @ L.T + 1e-5 * np.eye(N)
        return v, M
    
    def objective(params):
        v, M = unpack(params)
        v_next = phi_map(v, M)
        
        # We want to minimize gain
        gain = potential_J(v_next, M) - potential_J(v, M)
        return gain

    # Initial random guess
    A_init = np.random.randn(N, N)
    v_init = np.random.rand(N) + 0.1
    x0 = pack(v_init, A_init)
    
    # Run Optimization (L-BFGS-B handles smooth unconstrained problems well)
    res = minimize(objective, x0, method='L-BFGS-B',tol=1e-9)
    
    min_gain = res.fun
    
    if min_gain < -1e-8:
        raise ValueError(min_gain)
    else:
        pass

# Run the search

# Test on a few dimensions
run_adversarial_search(N=2)
run_adversarial_search(N=3)
run_adversarial_search(N=5)

\end{lstlisting}

This colab verification pattern I have found useful in several other mathematics-heavy contexts when working with AI models; e.g., if the model claims to prove convexity of a certain function, the quickest route to falsifying this claim is often to simply ask the model to generate a code snippet which tests this claim numerically. This leads to our third and final principle:

\textbf{Principle 3: Ask for empirical verification of theory.} If this test fails, it is often enough to feed this evidence back to the model to encourage it to revise its approach. Given the tendency of models to either hallucinate or suppress reasoning steps, quick empirical verification of claimed theoretical statements has saved me much time in the past.

I have hope that other mathematicians and scientists can save themselves time and frustration, and achieve a fair sense of what the current models are capable of, by following and expanding on these three principles.

\section{On the subtleties of checking correctness}

In an earlier version of this note, before posting publicly, there was an incorrect ``proof'' of~\cref{lem:concave} which slipped past me.

It is reproduced below.

\begin{fakeproof}
Let $\Psi(v) = D_v^{1/2} M D_v^{1/2}$ and note that $\Psi(v) = (\sqrt{v}\sqrt{v}^\top) \odot M$. Since the square root function is strictly concave on $\mathbb{R}_{++}$, the map $v \mapsto \sqrt{v}\sqrt{v}^\top$ is strictly operator concave. Because $M \succ 0$, the Schur Product Theorem implies that $\Psi(v)$ is also strictly operator concave.

Finally, since the trace-square-root functional $X \mapsto \Tr(X^{1/2})$ is operator monotone and strictly concave on the positive definite cone, the composition $J(v)$ is strictly concave. The linear term $-\sum v_i$ has a zero Hessian, thus the strict concavity of the first term yields $\nabla^2 J(v) \prec 0$ for all $v \in \R_{++}^N$.
\end{fakeproof}

The fundamental flaw with this argument is in the claim that $v \mapsto \sqrt{v}\sqrt{v}^\top$ is operator concave. A simple example choosing $v = (1, 4)$ and $w = (4, 1)$ is enough to contradict this claim. However, I did not spot it. Almost certainly I had asked for some kind of numerical verification of the claim of~\cref{lem:concave}, similar to the snippet above, and this led me to be a bit too trusting in the details of the argument.

Later on, I caught a subtle mismatch between the definition of $A$ used in~\cref{sec:minorizer_construction} and that which corresponds to the actual literal potential function. (The model used $D_v^{1/2}M^{1/2}$ instead of $M^{1/2}D_v^{1/2}$). Technically this was only a missed logical step, and not incorrect, because of the equivalence of the eigenvalues of $AB$ and $BA$ as noted in the (new) proof of~\cref{lem:concave}, but it resulted in some strange gymnastics (e.g. the use of the left polar factorization instead of the more traditional right polar factorization). I rewrote the argument in the current form you find it, which I find much more straightforward.

In some ways, each of these is an effective illustration of our principle 1 above. But it also highlights a real difficulty in using AI to accelerate mathematics in this way, at least, without coupling with formal methods. On the other hand, it is a belief of mine (shared by at least one other mathematician I know) that every math paper has gaps; the only question that counts is whether or not those gaps matter. Neither of these gaps matter. And indeed, once Gemini presented me with the variational characterization of the nuclear norm~\cref{lem:var_nuc}, I knew the problem was solved, and the rest was details.

Time will tell, I think, whether the \emph{ease} of producing a gap with AI-assisted mathematics will proliferate their relative occurrence, and whether the human checking will be overwhelmed by the volume of argumentation and its apparent simplicity, or whether this is a phantom concern that will either be solved by better models or improved ways of working.
\end{document}